\documentclass[10pt,twocolumn,letterpaper]{article}

\usepackage[pagenumbers]{cvpr} 

\definecolor{cvprblue}{rgb}{0.21,0.49,0.74}
\usepackage[pagebackref,breaklinks,colorlinks,allcolors=cvprblue]{hyperref}

\def\confName{CVPR}
\def\confYear{2026}

\usepackage{dblfloatfix}

\usepackage{amsthm}

\newtheorem{thm}{Theorem}
\newtheorem{lem}[thm]{Lemma}

\title{End-to-end Feature Alignment: A Simple CNN with Intrinsic Class Attribution}

\author{Parniyan Farvardin\\
University of Miami\\
{\tt\small pxf291@miami.edu}
\and
David Chapman\\
University of Miami\\
{\tt\small dchapman@cs.miami.edu}
}

\begin{document}
\maketitle
\begin{abstract}
We present Feature-Align CNN (FA-CNN), a prototype CNN architecture with intrinsic class attribution through end-to-end feature alignment.     Our intuition is that the use of unordered operations such as Linear and Conv2D layers cause unnecessary shuffling and mixing of semantic concepts, thereby making raw feature maps difficult to understand.  We introduce two new order preserving layers, the dampened skip connection, and the global average pooling classifier head. These layers force the model to maintain an end-to-end feature alignment from the raw input pixels all the way to final class logits. This end-to-end alignment enhances the interpretability of the model by allowing the raw feature maps to intrinsically exhibit class attribution. 
We prove theoretically that FA-CNN penultimate feature maps are identical to Grad-CAM saliency maps.
Moreover, we prove that these feature maps slowly morph layer-by-layer over network depth, showing the evolution of features through network depth toward penultimate class activations. 
FA-CNN performs well on benchmark image classification datasets.  Moreover,  we compare the averaged FA-CNN raw feature maps against Grad-CAM and permutation methods in a percent pixels removed interpretability task.  We conclude this work with a discussion and future, including limitations and extensions toward hybrid models.
\end{abstract}

\section{Introduction}
\label{sec:intro}

\begin{figure*}[htbp]
  \centering
\includegraphics[width=1.0\textwidth]{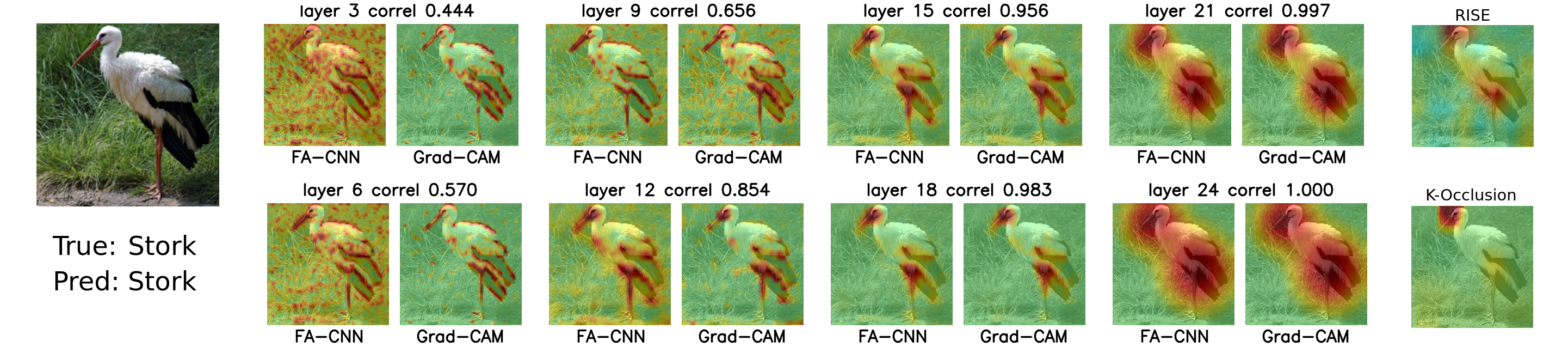}
    \caption{Comarison of averaged FA-CNN raw featuremaps (Left) versus Grad-CAM maps (Right).  Perturbation methods RISE and K-OCC are also shown (Far right).  Deepest layers are equivalent (correl 1.0), and middle layers show strong similarity.  Aligned features also show gradual morphing layer-by-layer through depth.  }
    \label{fig:billboard}
    \hfill
\end{figure*}

There is a strong and growing desire for neural network architectures that exhibit \textit{intrinsic interpretability}.  The goal of an \textit{intrinsically interpretable} model is to be able to learn a representation that a human can understand without the need for running another method to explain it.  In a perfect world, one ought to be able to plot the raw feature maps directly, and understand the meaning of the features.

Indeed, traditional CNN feature maps are not  \textit{Intrinsically Interpretable}.  Work in the areas of \textit{Network Dissection} and later \textit{Mechanistic Interpretability} have shown that CNN features comprise an \textit{entangled representation} \cite{bau2017network, chen2020concept, elhage2022toy}.  That in fact, features represent poly-semantic concepts.  The entanglement of seemingly unrelated semantic concepts is now considered to be the dominant root cause of the lack of interpretability of the feature representation \cite{bau2017network, chen2020concept, elhage2022toy}. Moreover, these entangled concepts often cross label boundaries, making the representation even more difficult to understand by a human \cite{liang2020training}.

So how can one improve the \textit{intrinsic interpretability} of CNN features?  There are two primary strategies toward this problem: \textit{Disentanglement}: an unsupervised approach to extract meaningful concepts from polysemantic features, and \textit{Class-Attribution}: a supervised approach to relate features to human-defined labels or classes.  In this work, we propose a novel strategy of \textit{end-to-end Feature alignment} through simple architectural modification.  We show that this approach effectively yields \textit{Class Attribution} throughout the depth of the CNN model.  In particular, we prove that Feature-Align CNN (FA-CNN) raw penultimate feature maps are identical to Grad-CAM saliency maps.  Furthermore, FA-CNN aligns features throughout network depth, maintaining a coherent ordering at every layer due to simple architectural constraints.    We prove that a rescaling of the FA-CNN features exhibits the \textit{Bounded Increment Property} through network Depth.  This property implies that one can observe the feature maps \textit{slowly morph} through network depth layer-by-layer from the projected inputs all the way to the penultimate attribution maps used for prediction.  Also notable is \textit{how} the features \textit{slowly morph}.  Qualitatively, the early layers appear to show a snapshot of what the model thinks about the target classes at that level of depth.  These features then morph toward the Grad-CAM saliency as additional semantic context becomes available.

These interpretability properties are clearly demonstrated in Figure \ref{fig:billboard} which shows an example image of a Stork from ImageNet.  Averaged FA-CNN raw features are compared with Grad-CAM saliency throughout network depth.  One can see that the penultimate features are identical (correl $1.0$), and that raw features morph slowly layer by layer due to bounded increment.  The middle layers also show high correlation and visually identify relevant parts of the stork.  Qualitatively the early layers show some excitement over the grassy background regions regions, which shows that the Stork concept requires moderate network depth to be able to distinguish from background.  After Layer 9, the model has enough semantic context to suppress these background features and by layer 12 onward the feature maps highly agree with Grad-CAM.  Although this gradual morphing shows something different from Grad-CAM in the early layers, it provides insight as to \textit{how much depth was necessary} for the model to understand the labeled concept and make its prediction.



\textit{Class Attribution} is highly desirable, and is a core area of interpretability research for vision models \cite{zhang2021survey, ibrahim2023explainable}.  Saliency maps produce a heatmap showing the regions of the image that are \textit{most important} for the vision model to predict of a given label.  Saliency maps have proven to be indispensable for practitioners to understand model predictions, and address model shortcomings for a wide range of datasets and applications.  Permutation methods are considered state-of-the-art in terms of the quality of the saliency map produced.  These methods are also model agnostic and gradient free, but they run extremely slowly requiring thousands of forward passes of the model to produce even a single saliency map \cite{simonyan2013deep, petsiuk2018rise, ribeiro2016should}.  Popular permutation methods include K-Occlusion \cite{simonyan2013deep}, RISE \cite{petsiuk2018rise}, LIME \cite{ribeiro2016should}, and SHAP \cite{lundberg2017unified}.  We compare the performance of our averaged penultimate feature maps against K-Occlusion and RISE as a means of evaluating class attribution quality.

Another category of \textit{Class Attribution} methods are the gradient based methods, including Grad-CAM and it's descendent Grad-CAM++ \cite{selvaraju2017grad, chattopadhay2018grad}.  These methods are much faster than permutation methods but they still typically require a separate forward pass of the model per label category in order to calculate gradients, which is still relatively slow.  Our proposed FA-CNN is an exception, as we prove that a simple averaging of the penultimate feature maps yields class attribution saliency maps for all classes simultaneously.  Moreover, we prove (and demonstrate) that FA-CNN features gradual morph over the network depth which provides insight into how these attribution maps are constructed by the network over each successive layer.  

A separate branch of research toward \textit{Class Attribution} involves the design of intrinsically interpretable architectures. 
Within this space there are several subcategories of approaches.  A notable branch of techniques are the Concept Bottleneck Networks (CBNs) \cite{koh2020concept, sawada2022concept, pittino2023hierarchical, kim2023probabilistic, shang2024incremental, wang2025mvp}.  CBNs exhibit a Concept Bottleneck Layer (CBL) which forces the model to predict a set of interpretable \textit{concepts}, and use only these \textit{concepts} to make the final prediction.  Recent work in CBNs is beginning to address the problem of unsupervised \textit{concept discovery} in addition to supervised concept bank \cite{sawada2022concept, shang2024incremental}.  

A notable limitation of CBM, is that the CBL is typically just a vector, and not a full saliency map. \citet{pittino2023hierarchical} attempts to provide concept localization but this requires a multi-level system design.  Another notable question is \textit{how} did the concept vector come to be?  The CBL is typically the result of a full black-box network.  \citet{wang2025mvp} addresses this problem to some extent by providing shortcuts from deeper layers to the concept vector.


Attention mechanism blocks provide an alternate strategy for producing intrinsically interpretable feature maps that somewhat resemble saliency maps \cite{linsley2018learning, xiao2015application, wang2017residual, woo2018cbam}.  These attention blocks are not to be confused with the self-attention that powers Transformers and Vision Transformers \cite{vaswani2017attention, dosovitskiy2020image}.  Notable attention blocks include GALA \cite{linsley2018learning} and CBAM \cite{woo2018cbam} which are simple easy-to-integrate attention modules, that can suppress irrelevant features and improve CNN performance.  Typically attention blocks include a spatial attention map and channelwise attention map.  Both of which are soft prediction masks which are internally used via scaled dot product to suppress irrelevant features and highlight relevant features.  The spatial attention mask shows a spatial region that is considered most important for the model prediction, but is often biased toward the predicted class.  

Additional recent efforts have produced neural network architectures with intrinsic class attribution.  CoDa-nets and their successor B-cos nets provide a mechanism to align feature weights with discriminating feature patterns \cite{bohle2021coda, bohle2024bcos, bohle2022bcos}. 
B-cos nets modify the linear operator $\textbf{\textit{w}}^T \textbf{\textit{x}}$ in order to suppress the response of features $\textbf{\textit{x}}$ that are unaligned with weights $\textbf{\textit{w}}$.  Given an input image, the B-cos network collapses the network architecture to an input-dependent linear transform, which can be used to produce class attribution maps.

An additional branch of \textit{Intrinsically Interpretable} architectures designed to learn class-specific features.  The Class-Specific Gate (CSG) \cite{liang2020training} is a notable technique that introduces a new loss function to encourage individual filters to learn class specific representations.  This gate monitors the activation late stage filters relative to class labels to encourage L1 sparsity.  However the dynamic assignment of classes to filters introduces additional non-convexity in the training landscape, making convergence difficult.  
ProtoPNets are another related method of learning class specific features intrinsically \cite{chen2019looks}.  This method learns a bank of class-dependent feature prototypes in the penultimate layer.  The penultimate features are then compared against these prototypes in a nearest-neighbor fashion. 

\subsection{Contributions}
\begin{itemize}
\item We show that end-to-end feature alignment improves intrinsic interpretability, enabling raw feature maps to exhibit class attribution and gradual morphing.  
\item We prove that FA-CNN penultimate feature maps are identical Grad-CAM saliency maps.  In particular, the 3D Global Average Pooling classifier head trivializes the calculation of gradients.
\item We prove theoretically that the dampened skip connection yields a property known as \textit{bounded increment through depth}.  This causes features to gradually morph layer by layer while maintaining alignment.
\item We compare the classification performance of FA-CNN versus ResNet-18 and ResNet-50, and demonstrate that FA-CNN is a performant classifier for the CIFAR-10 CIFAR-100 and ImageNet-100 subset datasets.
\item We analyze the quality of FA-CNN averaged penultimate feature maps versus permutation based post-hoc saliency methods.  We find that FA-CNN achieves good attribution performance but lower than permutation methods.

\end{itemize}

\section{Approach}
\label{approach}


Our approach is to design \textit{order-preserving} layers such as to prevent the network from unnecessarily shuffling neuron features, thereby discouraging the shuffling of semantic concepts in the feature space.  This \textit{order-preservation} aligns the semantic meaning of all of the features throughout the network, thereby improving intrinsic interpretability.  If \textit{order-preservation} is performed \textit{end-to-end} then raw feature maps will exhibit label attribution.  Our insight is that typical \textit{Linear} and \textit{Conv} layers do not preserve ordering.  This lack of ordering means that if a semantic concept maps to feature \textit{i} in layer \textit{L}, then it is highly unlikely that this concept will continue to map to feature \textit{i} in layer \textit{L+1}.  


\begin{figure}[b]
  \centering
\includegraphics[width=0.4\textwidth]{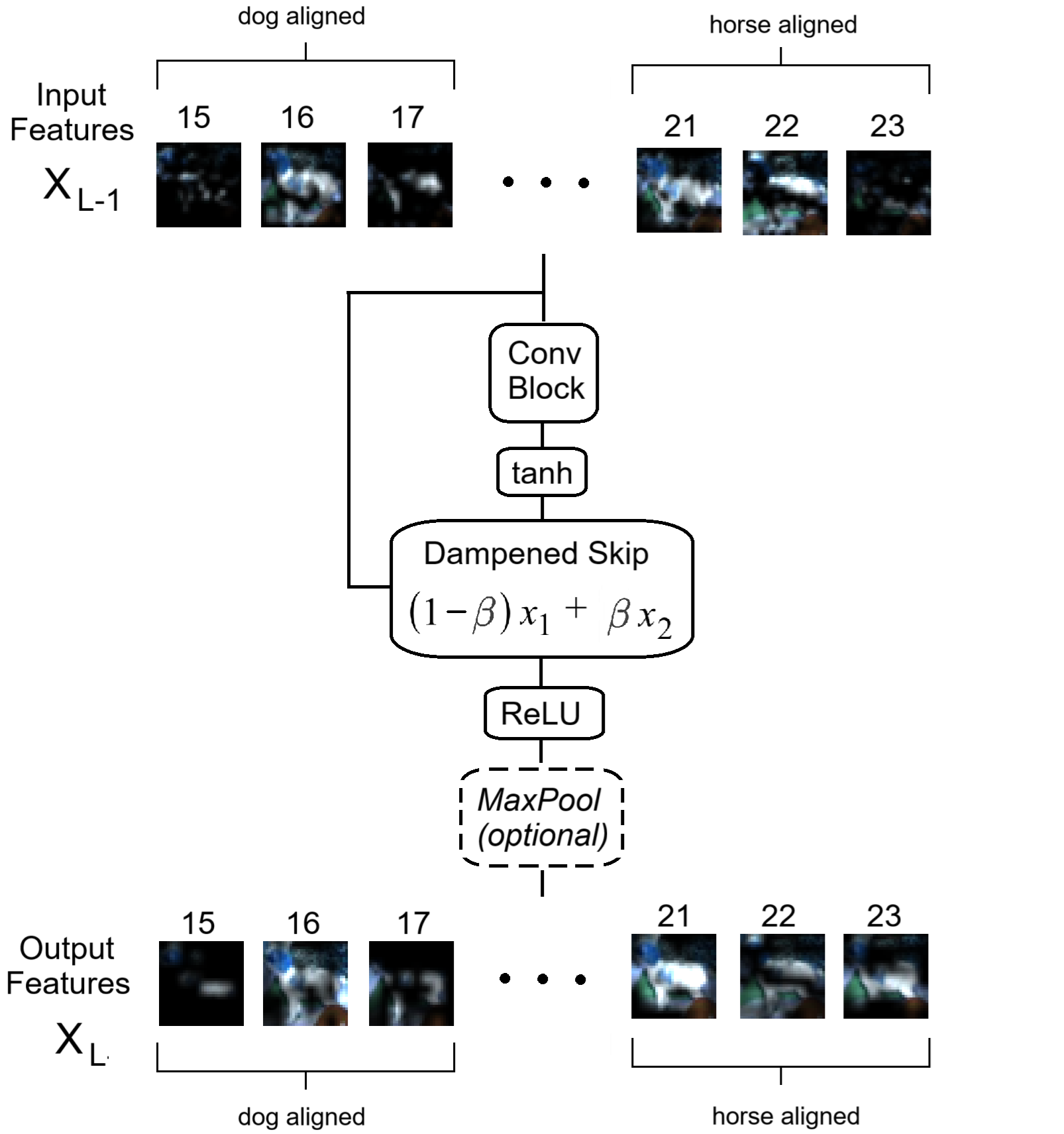}
    \caption{Diagram showing a single order-preserving FA-CNN Layer using the dampened skip connection.}
   \label{fig:dampened}
    \hfill
\end{figure}

Skip connections are a slight improvement in terms of order preservation, although inadequate to guarantee end-to-end alignment.  The traditional skip connection exhibits the following form:

\begin{equation}
X_{L+1} = ReLU(X_L + F(X_L))
\end{equation}

As can be seen, such a skip connection attempts to perform some amount of order-preservation although it is not sufficient to guarantee bounded increment through network depth.  The reason why skip connections 
do not guarantee bounded increment is because feature vector $X_L$ and output feature vector $X_{L+1}$ differ by at most the block outputs $F(X_L)$.  Note, absolute value is element wise as follows.

\begin{equation}
|X_{L+1} - X_{L}| \le |F(X_L)|
\end{equation}

Typically the block $F(X_L)$ is the result after Batch Normalization (BN).  Unfortunately, BN alone does not guarantee $F(X_L)$ is small or even normally distributed.  In practice the CNN may learn a highly skewed distribution for $F(X_L)$ in order to allow the block outputs to overpower the features $X_L$.  The ability of $F(X_L)$ to produce large magnitude values to overpower $X_L$ means that bounded increment is not guaranteed, and end-to-end feature alignment is not preserved.

\subsection{Dampened Skip Connection}

In order to 
guarantee bounded increment and thereby promote end-to-end feature alignment, we introduce a \textit{dampened skip connection}


\begin{equation}
\label{dampened}
\begin{aligned}
X_{L} = \textit{ReLU}\left( \left(1\!-\!\beta\right)X_{L-1} + \beta \; tanh\left(F\left(X_{L-1}\right)\!\mkern1mu\right)\!\mkern1mu\right) &\\
\textbf{where} \; \beta = \frac{1}{L} \quad \quad \quad \quad \quad \quad \quad \quad &
\end{aligned}
\end{equation}

The general idea is that $\beta \; tanh\left( F\left(X_L\right) \right)$ cannot exceed the interval of $[-\beta, \beta]$.  Using dampened skip connections, it is possible to construct a network that exhibits bounded increment though depth, thereby aligning features in consecutive layers.  In order to guarantee this property, however we must select the $\beta$ coefficient appropriately.  

In particular, we set $\beta$ as inversely proportional to the current layer depth $1 / L$ for layers $L=1 \dots N_L$.  Intuitively, this choice ensures that each successive layer has equal contribution to the overall feature magnitude.  Theoretically, this parameter choice is appropriate to guarantee the bounded increment property as proven in section \ref{sec:bounded}.  A diagram of the overall FA-CNN block including the dampened skip connection is shown in figure \ref{fig:dampened}.





\subsection{3D Global Average Pooling Classifier}

The next innovation of our network is to construct a classifier head to maintain feature alignment.  Typically the classifier head makes use of Linear layers that shuffle and mix together the features.  Our goal is to obtain an end-to-end alignment of the feature space from raw pixels all the way to label categories.  As such, it is desirable to have a strictly order-preserving classification head.

Our solution is to introduce a 3D Global Average Pooling head.  This head, not only averages the featuremaps spatially, but also averages consecutive filters together in order to predict the target categories.  As such, the pre-logit predictions are simply the 3D Global Average pooling of entire class-aligned penultimate feature maps as seen in figure \ref{fig:avgpool}.
This pooling ensures that feature maps $0 \dots R-1$ are predictive of label $0$, feature maps $R \dots 2R-1$ are predictive of label $1$ and so forth.  This classification head has a further justification because it is theoretically very similar to the underlying assumptions made by Grad-CAM, and this causes the FA-CNN penultimate feature maps to have a strong theoretical relation to Grad-CAM as we describe in section \ref{sec:gradcam}.

\begin{equation}
Y = \text{3DGlobalAvgPool}(X_{N_L})
\end{equation}

Once the pre-logits are calculated, we must further calculate the final logits, but as FA-CNN is a ReLU network, simply running softmax would not be appropriate, as the feature maps are non-negative.  Thus the pre-logits are also non-negative.  As such, we instead normalize the pre-logits based on the $L1$ norm in order to calculate final logits $\hat{Y}$

\begin{equation}
\label{prelogits}
\hat{Y} = \frac{Y}{||Y||_1}
\end{equation}



\begin{figure}[t]
\includegraphics[width=0.4\textwidth]{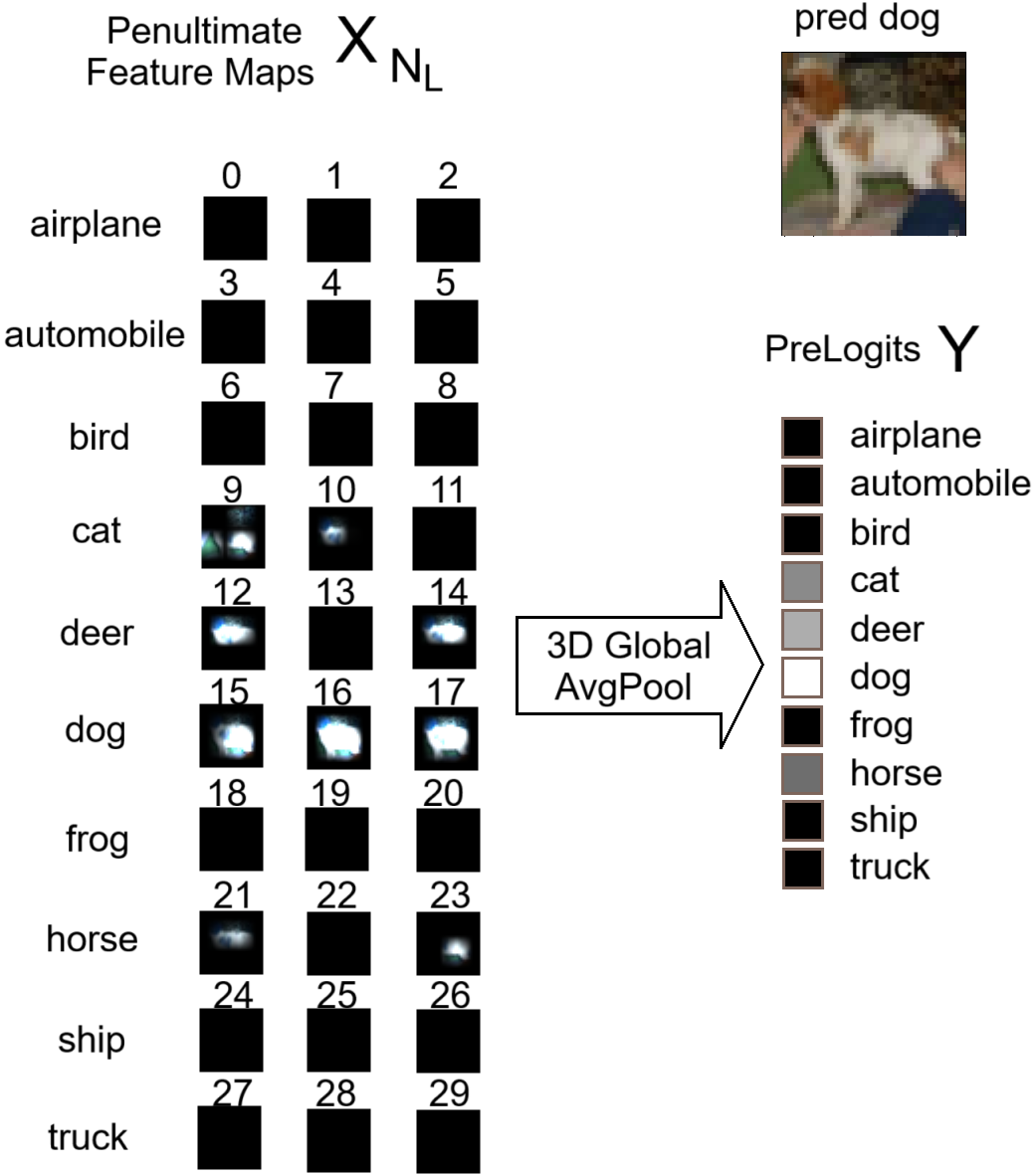}
    \caption{Diagram showing The 3D Global Average Pooling head to compute pre-logits from Penultimate Feature Maps.}
\label{fig:avgpool}
\end{figure}

\section{Proof of Grad-CAM equivalence}
\label{sec:gradcam}

We now prove that a trivial averaging of FA-CNN penultimate feature maps yields Grad-CAM saliency maps. 
As such, one does not need to run Grad-CAM separately for each of $C$ classes in post-hoc, because all $C$ Grad-CAM maps are already generated for free by the penultimate layer.  In order to facilitate legibility of the proof, we use the same notation as \cite{selvaraju2017grad}.  Notably, we have previously referred to the FA-CNN feature maps as $X_L$ for layers $1 \dots N_L$, thus the penultimate feature maps would be $X_{N_L}$.  For this proof, we define $A = X_{N_L}$ because this is standard notation for Grad-CAM.

\begin{thm} \label{thm:gradcam}
Grad-CAM saliency maps are obtained by simply averaging together the $R$ penultimate feature maps designated to class $c$ as follows.

\begin{equation}
L^c_{\text{Grad-CAM}}
\left(x, y\right) \; = \; \frac{1}{R} \sum_{\lfloor k  / R \rfloor = c} A_k \left(x, y\right)
\end{equation}
\end{thm}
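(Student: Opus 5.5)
We plan to unwind the Grad-CAM definition of \cite{selvaraju2017grad} and evaluate each ingredient directly on the FA-CNN head. Recall that Grad-CAM computes, for a class score $y^c$ and penultimate maps $A_k$ of spatial size $Z = H\times W$, the channel weights $\alpha^c_k = \frac{1}{Z}\sum_{i,j} \partial y^c / \partial A_k(i,j)$. It then forms $L^c_{\text{Grad-CAM}} = \textit{ReLU}\left(\sum_k \alpha^c_k A_k\right)$. Following the standard convention, we take $y^c$ to be the pre-softmax score. For FA-CNN this is the pre-logit $Y_c$ produced by the 3D Global Average Pooling head, not the $L1$-normalized $\hat{Y}$ of equation \eqref{prelogits}. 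We will state this choice explicitly, since it is the analogue of Grad-CAM's use of scores before the softmax.

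The first step is to write the head in closed form:
\begin{equation}
Y_c = \frac{1}{R Z} \sum_{\lfloor k/R \rfloor = c} \sum_{i,j} A_k(i,j).
\end{equation}
Because $Y_c$ is linear in $A$, the gradient is constant. It equals $\partial Y_c/\partial A_k(i,j) = \frac{1}{RZ}$ when $\lfloor k/R\rfloor = c$, and $0$ otherwise. Averaging this over the $Z$ spatial positions gives $\alpha^c_k = \frac{1}{RZ}$ on the $R$ channels assigned to class $c$, and $\alpha^c_k = 0$ on all other channels.

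Substituting these weights, the weighted sum becomes $\frac{1}{RZ}\sum_{\lfloor k/R\rfloor=c} A_k(x,y)$. Two details remain.

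\begin{itemize}
\item \textbf{The outer ReLU.} Each $A_k = X_{N_L}$ is the output of a ReLU in equation \eqref{dampened}, so $A_k \ge 0$. The ReLU therefore acts as the identity and can be dropped.
\item \textbf{The constant factor $1/Z$.} Grad-CAM maps are defined only up to positive scaling: they are normalized before display, and any correlation-based comparison is insensitive to scale. Alternatively, one can adopt the convention that the gradient is taken with respect to the spatial sum, as in the Grad-CAM paper's own CAM-equivalence argument. Either way, the $1/Z$ is absorbed and we obtain the stated $\frac{1}{R}\sum_{\lfloor k/R\rfloor=c} A_k(x,y)$.
\end{itemize}

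The calculus is trivial, so the main obstacle is being precise about these normalization conventions. These are the choice of $y^c$ (pre-logit $Y$ versus $\hat{Y}$) and the handling of the factor $1/Z$. If one insisted on differentiating $\hat{Y}_c = Y_c/\|Y\|_1$, the gradient would pick up cross terms $-Y_c/\|Y\|_1^2$ on every channel. Since $Y \ge 0$, we have $\|Y\|_1 = \sum_{c'} Y_{c'}$, and so the weights become $\alpha^c_k \propto \left(\mathbb{1}[\lfloor k/R\rfloor=c] - \hat{Y}_c\right)$. That breaks exact equality, so we will fix $y^c = Y_c$ as the class score, consistent with Grad-CAM's pre-softmax convention, and remark on this in the proof.
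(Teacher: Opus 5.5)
Your proposal is correct and follows essentially the same route as the paper. Both differentiate the linear 3D Global Average Pooling head with respect to the pre-logit score $Y^c$, observe that the Grad-CAM weights $\alpha^c_k$ are a common positive constant on the $R$ channels with $\lfloor k/R \rfloor = c$ and zero elsewhere, and drop the outer ReLU because the post-ReLU maps $A_k$ are non-negative.

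Your bookkeeping is in fact more careful than the paper's. The paper asserts $\partial Y^c/\partial A_k(i,j) = 1/R$, but a spatial-and-channel average gives $1/(RZ)$, so the stated identity holds literally only up to the positive factor $1/Z$. You absorb that factor explicitly using Grad-CAM's own scale convention.
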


\begin{proof}
Grad-CAM saliency maps are defined by the following equation \cite{selvaraju2017grad}.

\begin{equation}
L^{c}_{\text{Grad-CAM}}\left(x,y\right) = \text{ReLU}\left(\sum_k \alpha^c_k A_k\left( x,y \right) \right)
\label{gradcam}
\end{equation}

Where $A_k\left(x,y\right)$ is the raw penultimate feature map for filter $k$ at pixel location $(x,y)$. $\alpha^c_k$ is a class dependent weighting as determined by the global average contribution of the feature map to the pre-logit classifier score $Y^c$ as follows,

\begin{equation}
\alpha^c_k = {\underbrace{\frac{1}{Z}}_{\text{Global Avg-Pool}}} \sum_i \sum_j {\underbrace{\frac{\partial Y^c}{\partial A_k\left(i,j\right)}}_{\text{Gradient }Y^c_k\text{ w.r.t }A_k}}
\end{equation}

For a typical neural net, one must run run the model $C$ times with gradients, in order to calculate $\partial Y^c / \partial A_k$ for each output class $c=1 \dots C$.  These derivatives are used to generate the coefficients $\alpha^c_k$ needed to construct the Grad-CAM saliency maps.  

But for FA-CNN, the partial derivatives $\partial Y^c / \partial A_k$ are trivial, which further trivializes calculation of coefficients $\alpha^c_k$.  These partial derivatives are trivial, because FA-CNN uses 3D Global Average Pooling to calculate pre-logit score $Y^c$ from feature maps $A_k$.  Given $K=C R$ feature maps, where $R$ is the number of feature maps per output category, the partial derivatives are either zero (for feature maps irrelevant to the category) or positive constant (for feature maps relevant to output category) as follows,

\begin{equation}
\frac{\partial Y^c}{\partial A_k\left(i,j\right)} =
\begin{cases}
 1 / R & \text{if } \lfloor k  / R \rfloor = c  \\
  \; \; 0 & \; \; \ \text{otherwise}
\end{cases}
\end{equation}

Thus, $\alpha^c_k$ simplifies to $1 / R$ for each of the $R$ feature maps relevant to class $c$, and zero otherwise.  As $A_k$ is post-ReLU, both $A_k$ and $\alpha^c_k$ are non-negative.  Therefore the Grad-CAM like saliency maps in equation (\ref{gradcam}) are obtained by simply averaging together the $R$ penultimate raw feature maps relevant to class $c$, thereby arriving at the definition shown in Theorem \ref{thm:gradcam}.
\end{proof}

\section{Proof of Bounded Increment through Depth}
\label{sec:bounded}

We now prove our claim that \textit{``Feature maps smoothly morph through network depth from initial inputs all the way to the final Grad-CAM maps"}.  The practical benefit of this claim is that one can inspect the evolution of featuremaps through network depth toward the construction of GradCAM maps.  This aids in understanding which classes the model is considering at any given depth. 

In order to prove this claim, we must first define the claim in a bit more detail.  More specifically, we claim that a straight-forward resizing and rescaling of the raw FA-CNN featuremaps exhibits a property known as \textit{Bounded Increment through Depth}.  This means that the rescaled feature maps change at most a constant rate over depth.
We refer to the feature map at layer depth $L$ as  $X_L$ for $L=1 \dots N_L$.  We define our \textit{rescaled} feature maps as $S_L^{(T)}$ as follows,

\begin{equation}
S_L^{(T)} = \begin{cases}
\quad \quad \quad 0  & \quad \text{ if } L = 0\\
\; \frac{L}{N_L} \; \text{Pool} \big( X_L, T\big) & \quad otherwise
\end{cases}
\label{rescale}
\end{equation}

Where $T$ is a target layer for downsampling all features to the same resolution.  As FA-CNN internal layers use MaxPooling, this downsampling must also use MaxPooling.  This proof can also be adapted to use AvgPooling, given a variant of FA-CNN that uses AvgPooling internally.  For FA-CNN,
$\text{Pool}\left(X_L, T\right)$ is defined as follows.


\begin{equation}
\begin{aligned}
\text{Pool}\left( X_L, T \right) = \quad & \\ \text{MaxPool} & \left(X_L,   \; \text{ker}=\left(\frac{\text{rows}_L}{\text{rows}_{T}},  \frac{\text{cols}_L}{\text{cols}_{T}}\right) \right)
\end{aligned}
\end{equation}

\begin{thm}
\label{thm:bounded}
Rescaled feature maps exhibit bounded increment through depth.  As such rescaled feature values of consecutive layers $L$ and $L+1$ change by at most a constant factor $\delta = 1 / {N_L}$ as follows,.

\begin{equation}
\begin{aligned}
\forall \; 1 \le L \le T \quad \quad \quad \quad \quad \quad \quad \quad & \\
\left| S_{L,k}^{(T)}(x,y)  - S_{L-1,k}^{(T)}(x,y) \right| & \le \delta \quad \\ 
\textbf{where } \delta = \frac{1}{N_L} \quad \quad \quad &\; 
\end{aligned}
\end{equation}
\end{thm}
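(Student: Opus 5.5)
The approach is to clear the depth-dependent coefficient in the dampened skip connection (\ref{dampened}), which turns the recursion into a statement about the unscaled quantities $L\,X_L$. Multiplying (\ref{dampened}) by $L$, and using positive homogeneity of ReLU ($L\cdot\text{ReLU}(z)=\text{ReLU}(Lz)$ for $L>0$), gives
\begin{equation*}
L\,X_L \;=\; \text{ReLU}\big((L-1)\,X_{L-1} + \tanh(F(X_{L-1}))\big).
\end{equation*}
The elementary fact I will rely on is this: for $a\ge 0$ and any real $t$,
\begin{equation*}
\left|\text{ReLU}(a+t)-a\right|\le |t|.
\end{equation*}
It holds because $a=\text{ReLU}(a)$ and ReLU is $1$-Lipschitz. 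I apply it elementwise with $a=(L-1)X_{L-1}$, which is non-negative because $X_{L-1}$ is post-ReLU. I apply it with $t=\tanh(F(X_{L-1}))\in[-1,1]$. This yields the unpooled bound
\begin{equation*}
\left|L X_{L,k}(x,y)-(L-1)X_{L-1,k}(x,y)\right|\le 1
\end{equation*}
for every channel $k$ and pixel.

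The base case $L=1$ is covered by the same computation. There $\beta=1$, so $X_1=\text{ReLU}(\tanh F(X_0))\in[0,1]$, while $S_0^{(T)}=0$ by (\ref{rescale}). So $|S_1^{(T)}|\le 1/N_L$ follows directly, with no assumption on $X_0$ beyond the block being well defined.

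Next I transfer the bound through $\text{Pool}(\cdot,T)$ and divide by $N_L$. For this I use the fact that max pooling over a fixed window is $1$-Lipschitz in the sup norm:
\begin{equation*}
\left|\max_i a_i-\max_i b_i\right|\le\max_i|a_i-b_i|.
\end{equation*}
Max pooling also commutes with multiplication by the non-negative scalars $L$ and $L-1$. Given these two facts, $\text{Pool}(L X_L,T)$ and $\text{Pool}((L-1)X_{L-1},T)$ differ by at most $1$ entrywise, provided both sides are pooled over matching windows on the same input grid. Dividing by $N_L$ then gives $|S^{(T)}_{L,k}-S^{(T)}_{L-1,k}|\le 1/N_L=\delta$.

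The restriction $L\le T$ is exactly what guarantees that pooling to resolution $T$ is a downsampling, i.e.\ that the kernel sizes $\text{rows}_L/\text{rows}_T$ are integers $\ge 1$.

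The main obstacle is the resolution change inside the network. When an internal MaxPool sits between layers $L-1$ and $L$, the recursion actually uses the downsampled map $X'_{L-1}=\text{MaxPool}(X_{L-1})$, not $X_{L-1}$ itself. The unpooled bound above then compares $L X_L$ with $(L-1)X'_{L-1}$.

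To close this gap I will argue that nested max pools with divisible kernels compose into a single max pool:
\begin{equation*}
\text{Pool}(X'_{L-1},T)=\text{Pool}(X_{L-1},T).
\end{equation*}
This holds because the max over each $T$-cell is the same whether or not one first takes maxes over the sub-blocks that partition it. Hence pooling both sides to resolution $T$ reduces the general case to the same-grid case handled above.

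I will also need to state explicitly the standing assumptions implicit in the architecture:
\begin{itemize}
\item the channel count $K$ is constant across layers, so the index $k$ is meaningful at every depth (order preservation);
\item resolutions divide evenly;
\item the internal pooling is max pooling.
\end{itemize}
The AvgPooling variant mentioned in the text would go through identically, since average pooling is also $1$-Lipschitz in sup norm and composes across nested windows.
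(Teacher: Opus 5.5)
Your proposal is correct and is essentially the paper's argument. Both rescale the dampened recursion so the depth-dependent coefficients cancel: the paper multiplies by $L/N_L$ to get $S_L=\textit{ReLU}(S_{L-1}+H)$ with $H\in[-\delta,\delta]$, while you multiply by $L$. Both then use that the ReLU of a non-negative value plus a bounded perturbation moves by at most that perturbation, and pass the bound through max pooling by its sup-norm non-expansiveness (the paper's ``property 1''). The differences are only organizational---the paper wraps this in an induction on $T$ jointly with Lemma~\ref{lem:bounded}, while you argue directly for each $(L,T)$ and make explicit the nested-window step for internal pooling that the paper leaves implicit.
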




\begin{lem}
\label{lem:bounded}
Rescaled features at layer $L$ are bound to a non-negative range $[0, \; \delta L]$
\begin{equation}
\forall \; 0 \le L \le N_L \quad \quad \quad S_{L,k}^{(N_L)}(x,y) \; \in \;  \left[ 0, \; \delta L \right]
\end{equation}
\end{lem}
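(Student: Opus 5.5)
We argue by induction on $L$ that the raw features satisfy $X_{L,k}(x,y) \in [0, 1]$ for every $L \ge 0$, and then transfer this to the rescaled maps. Since $\text{Pool}(X_L, N_L)$ is a MaxPool, every pooled value equals some raw value of $X_L$ and so inherits the same range. Multiplying by $L/N_L = \delta L$ then gives $S^{(N_L)}_{L,k}(x,y) \in [0, \delta L]$. The case $L=0$ holds by the definition $S_0^{(T)} = 0$ in equation (\ref{rescale}).

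The raw bound is actually too weak to close the induction cleanly, so we instead prove the sharper statement $X_{L,k}(x,y) \in [0,1]$ in the form
\begin{equation}
X_{L-1} \in [0,1] \;\Rightarrow\; X_L \in [0,1].
\end{equation}
Non-negativity is immediate because $X_L$ is the output of a ReLU. For the upper bound we use the dampened skip connection (\ref{dampened}). Since $\tanh(\cdot) \le 1$ elementwise and $\beta = 1/L \in (0,1]$, we have
\begin{equation}
(1-\beta)X_{L-1} + \beta \tanh(F(X_{L-1})) \le (1-\beta)\cdot 1 + \beta \cdot 1 = 1.
\end{equation}
ReLU is monotone and fixes values in $[0,1]$, so $X_L \le 1$. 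At $L=1$ we have $\beta=1$, so $X_1 = \mathrm{ReLU}(\tanh(F(X_0))) \in [0,1]$ regardless of the input $X_0$. The induction therefore starts at $L=1$ and needs no assumption on the raw pixels.

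The main obstacle is exactly this base case and the bookkeeping behind it. The statement is indexed over $0 \le L \le N_L$, while $X_0$ (the input or its projection) need not lie in $[0,1]$. The argument must use $\beta_1 = 1$ to erase $X_0$, and must treat $S_0 = 0$ separately by definition rather than through $X_0$. A secondary point is to confirm that the convex combination in (\ref{dampened}) is applied after any internal pooling. If there is a MaxPool between blocks, it preserves the range $[0,1]$ since it only selects existing values, so the induction is unaffected.

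Finally, we note that the sharper inductive form actually yields more than the lemma claims. Unrolling the recursion with $\beta = 1/L$ gives $L\, X_L \le (L-1)X_{L-1} + 1$. This is the form that will feed into the proof of Theorem \ref{thm:bounded}, where the differences of consecutive rescaled maps must be bounded by $\delta$.
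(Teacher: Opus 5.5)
Your proof is correct, but it is organized differently from the paper's. The paper never proves the lemma in isolation. It proves it jointly with Theorem~\ref{thm:bounded} by induction on the target resolution $T$. It rewrites the dampened update in rescaled form as $S_L = \mathrm{ReLU}(S_{L-1} + H)$ with $H \in [-\delta, \delta]$, and moves between resolutions $T-1$ and $T$ via a pooling property from the supplement. It then gets $S_L \in [0, \delta L]$ by adding one increment of size at most $\delta$ to the inductive bound $S_{L-1} \in [0, \delta(L-1)]$.

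You instead prove the scale-free invariant $X_L \in [0,1]$ directly. You use that the pre-ReLU update is a convex combination of $X_{L-1}$ and $\tanh(\cdot) \le 1$, with $\beta_1 = 1$ erasing $X_0$, and only at the end apply MaxPool and the factor $L/N_L = \delta L$. Multiplying your inequality $(1-\beta)X_{L-1} + \beta \tanh(\cdot) \le 1$ through by $\delta L$ gives exactly the paper's $S_{L-1} + H \le \delta L$, so the core computation is the same.

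What your decomposition buys:
\begin{itemize}
\item independence from the theorem and from the supplemental pooling property;
\item transparent handling of the internal MaxPool layers between blocks, since they only select values already in $[0,1]$;
\item a bound valid for every target resolution $T \ge L$, not just $T = N_L$.
\end{itemize}
The paper's route, in exchange, extracts both results from a single recursion and presents the range bound as an accumulation of bounded increments, which is the picture it wants for the morphing claim.

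Two small presentational fixes. Your opening sentence should say $L \ge 1$ rather than $L \ge 0$; as you note later, $X_0$ is unconstrained and $S_0 = 0$ holds by definition. The sentence calling the raw bound too weak, just before you prove that same bound, should be dropped. Separately, if your closing remark is meant to feed Theorem~\ref{thm:bounded}, you also need the matching lower bound $L X_L \ge (L-1)X_{L-1} - 1$. It follows from $\mathrm{ReLU}(z) \ge z$ and $\tanh \ge -1$.
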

\begin{proof} 
We will prove Theorem \ref{thm:bounded} and Lemma \ref{lem:bounded} simultaneously by induction over variable $T$. For brevity we omit indices $k, x, y$ but refer these implicitly. \vspace{3pt}

\noindent\underline{Base Case $\; T=0$}:  Clearly if $T=0$ then $L=0$, so trivially $S_0^{(0)}=0$ by equation \ref{rescale}.  So Lemma \ref{lem:bounded} is proven for $T=0$.\vspace{4pt}

\noindent\underline{Base Case $\; T=1$}:  Clearly $S_0^{(T)}=0$ for all $T$ according to equation \ref{rescale}.  Featuremap $X_1$ is post \textit{ReLU} and post \textit{tanh}, therefore $X_1 \in \left[0, 1\right]$.  For $T=L$ no pooling is performed.  Thus, $S_1^{(1)} = \delta X_1$ by equation \ref{rescale}.  Thus, $X_1 \in \left[0, \delta \right]$, so Lemma \ref{lem:bounded} is proven for $T=1$. \vspace{4pt}

As $S_1^{(1)} \in \left[0, \delta\right]$ by Lemma \ref{lem:bounded} and $S_0^{(1)}=0$ by equation \ref{rescale} it must be $\left|S_1^{(1)} - S_0^{(1)} \right| \le \delta$, so Theorem \ref{thm:bounded} is proven for $T=1$. \vspace{4pt}

\noindent\underline{Inductive Case.}  \textit{Given Theorem \ref{thm:bounded} and Lemma \ref{lem:bounded} are true for $T-1$, prove Theroem \ref{thm:bounded} and Lemma \ref{lem:bounded} are true for $T$}. \vspace{3pt}
Given: 
\begin{equation}
\begin{aligned}
\forall \; 1 \le L \le T-1 \quad \quad 
\left| S_{L}^{(T-1)}  - S_{L-1}^{(T-1)} \right| & \le \delta \quad 
\end{aligned}
\end{equation}

By property 1 (supplemental material) we must have,
\begin{equation}
\begin{aligned}
\forall \; 1 \le L \le T-1 \quad \quad 
\left| S_{L}^{(T)}  - S_{L-1}^{(T)} \right| & \le \delta \quad 
\end{aligned}
\end{equation}

So Theorem \ref{thm:bounded} only remains to be proven for the target layer $L=T$.  Thus, we assume $L=T$ for the remainder of the proof.  Note that layer $L-1=T-1$ so equation \ref{rescale} does not perform pooling in this case.  Thus we can rearrange equation \ref{rescale} as follows,


\begin{equation}
\label{rearrange}
X_{L-1} = \frac{N_L}{L-1} S_{L-1}^{(T-1)}
\end{equation}

And by equation \ref{dampened}, before max pooling,
\begin{equation}
\begin{aligned}
X_{L} = \textit{ReLU}\left( \left(1\!-\!\beta\right)X_{L-1} + \beta \; tanh\left(F\left(X_{L-1}\right)\!\mkern1mu\right)\!\mkern1mu\right) &\\
\textbf{where} \; \beta = \frac{1}{L} \quad \quad \quad \quad \quad \quad \quad \quad &
\end{aligned}
\end{equation}

By substituting equation \ref{rearrange},
\begin{equation}
\begin{aligned}
\frac{N_L}{L} S_L^{(T-1)} = 
\textit{ReLU} \left( \frac{L-1}{L}\frac{N_L}{L-1} S_{L-1}^{(T-1)} + J \right) & \\
\textbf{where} \; J \in \left[ -\beta, \beta \right] \quad \quad \quad \quad \quad \quad &
\end{aligned}
\end{equation}

After simplification,
Given $S_{L-1}^{(T)} \in $
\begin{equation}
\label{simplification}
S_L^{(T-1)} = \textit{ReLU}\left( S_{L-1}^{(T-1)} + H \right) 
\textbf{where} \; H \in \left[ -\delta, \delta \right]
\end{equation}

Thus, $\left| S_L^{(T-1)} - S_{L-1}^{(T-1)}\right| \le \delta$
and given property 1 (supplemental material), 

\begin{equation}
\label{thmproof}
\left| S_L^{(T)} - S_{L-1}^{(T)}\right| \le \delta
\end{equation}

and Theorem \ref{thm:bounded} is proven.

From here, we easily prove Lemma \ref{lem:bounded}, because $S_{L-1}^{(T)} \in [0, \delta(L-1)]$ by induction, and given equation \ref{thmproof}, and given that $S_L^{(T)}$ is post ReLU by equation \ref{simplification}.  Therefore,
\begin{equation}
S_{L-1}^{(T)} \in \left[ 0, \delta L \right]
\end{equation}

And Lemma \ref{lem:bounded} is also proven.
\end{proof}

\section{Results}
\label{sec:results}

We evaluate the proposed FA-CNN on three image classification benchmarks of increasing complexity: CIFAR-10 (10 classes, $32\times32$), CIFAR-100 (100 classes, $32\times32$), and a randomly selected ImageNet-100 subset (100 classes, native ImageNet resolution). All experiments report top-1 validation accuracy.

All models are trained from scratch without any pretraining with a batch size of 100 training samples. 
For data augmentation, the CIFAR datasets employ standard random cropping and horizontal flipping. 
FA-CNN is configured with 24 layers with MaxPooling after every 8 layers.  FA-CNN uses 200 channels for CIFAR-10, but
the number of channels is increased to 600 for CIFAR-100 and ImageNet-100, maintaining approximately six feature maps per class for the Pooling head. 
For CIFAR datsets the models are trained for 300 epochs, whereas for ImageNet-100, the models were stopped at 100 epochs as the training and validation curves indicate convergence.

For this comparison, we adopt a minimal optimization setting using stochastic gradient descent (SGD). 
We deliberately disable momentum and weight decay for FA-CNN in our main experiments to provide a controlled comparison that highlights the architectural properties of the model. This setup ensures that the reported gains primarily reflect structural design rather than optimization-specific advantages.  
We separately include results for the ResNet baselines following standard training protocols found in literature to ensure competitive performance. 
For CIFAR-10 and CIFAR-100, we use SGD with momentum $0.9$ and weight decay $5\times10^{-4}$, together with a MultiStep learning rate schedule (milestones at epochs 100 and 150, decay factor $0.1$). For ImageNet-100, we use SGD with momentum $0.9$, weight decay $1\times10^{-4}$, and a Step learning rate schedule (step size 30, decay factor $0.1$).  

We compare three variants of FA-CNN with standard ResNet baselines.  FA-CNN (Pooling head) is the full FA-CNN architecture with \textit{dampened skip connections} and \textit{the global average pooling head}.  FA-CNN (Linear head) still has the \textit{dampened skip connections} but a standard linear classification head.  Vanilla-CNN is a disabled version of FA-CNN with just regular skip connections and the standard linear head.  All three variants have the exact same backbone configuration and layers.



\subsection{Classification Accuracy}


\begin{table}[b]
\centering
\renewcommand{\arraystretch}{1.25}  
\small                              
\resizebox{\columnwidth}{!}{
\begin{tabular}{lccc}
\toprule
\textbf{Method} & \textbf{CIFAR-10} & \textbf{CIFAR-100} & \textbf{ImageNet-100} \\
\midrule
FA-CNN (\textbf{Pooling} head) & 93.13 & \textbf{77.03} & \textbf{81.78} \\
FA-CNN (Linear head)           & \textbf{93.80} & 75.32 & 77.50 \\
Vanilla-CNN           & {92.97} & 75.07 & 76.98 \\

ResNet-18                     & 93.57 & 73.96 & 77.74 \\
ResNet-50                      & 91.50 & 72.45 & 76.76 \\

\midrule
ResNet-18 (standard recipe)                  & \textbf{94.89} & 77.31 & \textbf{83.44} \\
ResNet-50 (standard recipe)     & 94.49 & \textbf{78.10} &  82.96\\
\bottomrule
 \end{tabular}}
\caption{\textbf{Top-1 validation accuracy (\%)} across datasets for all models.  Models with minimal training recipe (top) reference models with standard recipe (bottom). 
}

\label{tab:main_results}
\end{table}

Table~\ref{tab:main_results} summarizes the top-1 validation accuracy across all datasets.  Table \ref{tab:main_results} (top) shows the comparison of all models with the minimal training recipe.  FA-CNN achieves accuracy comparable to standard ResNets despite incorporating architectural constraints aimed at improving interpretability. 
On \text{CIFAR-10}, FA-CNN with the Linear head attains \(93.8\%\) accuracy, matching ResNet-18 (\(93.57\%\)) and exceeding ResNet-50 (\(91.5\%\)). 
On \text{CIFAR-100}, FA-CNN with the Pooling head achieves \(77.03\%\), exceeding ResNet-18 (\(73.96\%\)) while providing intrinsic class attribution.
On the \text{ImageNet-100} subset, FA-CNN reaches \(81.78\%\), comparable to the minimal-trained ResNet-18 baseline (\(77.74\%\)). 
In comparison to ResNet with the standard recipe, we see that the FA-CNN (minimal) results are between $1\%$ to $2\%$ lower than the best ResNet model.

These results confirm that both the dampened skip connection and interpretable pooling actually improve performance relative to an identical Vanilla-CNN architecture, and with minimal training can even come close to ResNet models with the standard recipe.

\subsection{Pixel-Removal Saliency Evaluation}


We quantitatively assess the faithfulness of FA-CNN’s intrinsic saliency maps using the standard \emph{pixel-removal} protocol widely adopted in interpretability benchmarks. For each input image, a given percentage of the most salient pixels is progressively removed, and the resulting drop in top-1 accuracy is measured. Varying the removal fraction yields an \emph{accuracy–fraction removed} curve that reflects the stability of the attribution. We compare FA-CNN against Grad-CAM \cite{selvaraju2017grad}, \text{RISE}~\cite{petsiuk2018rise} and \text{K-Occlusion} (K-OCC)~\cite{simonyan2013deep}.  We also include random masking baseline (RAND).

Figure~\ref{fig:facnn_block} shows the pixel-removal robustness curves on ImageNet-100 for intermediate Layer 18 and the penultimate layer 24. It is clear that for both FA-CNN and Grad-CAM the best results are obtained using the penultimate layer (Layer 24), and the saliency maps for the intermediate layer (Layer 18) are less robust for this task.  Perturbation methods RISE and K-OCC have no layer dependence because these exhaustive black-box methods are unaware of the architecture.
All methods are skillful relative to random removal.  In the low-removal range (0--20\%), perturbation-based methods, particularly RISE and K-OCC, typically exhibit the strongest robustness. This robustness is due to the permutation methods' ability to increase classification accuracy when a small percentage of irrelevant pixels are removed.  FA-CNN and Grad-CAM are unable to increase classification accuracy in the low-removal range, but exhibit similar robustness to RISE and K-OCC in the moderate removal range (40--60\%) and exceeds K-OCC in the high removal range (60\%+) showing a strong ability to prevent relevant pixels from being removed.


\begin{figure}[tbp]
  \centering
\includegraphics[width=0.45\textwidth]{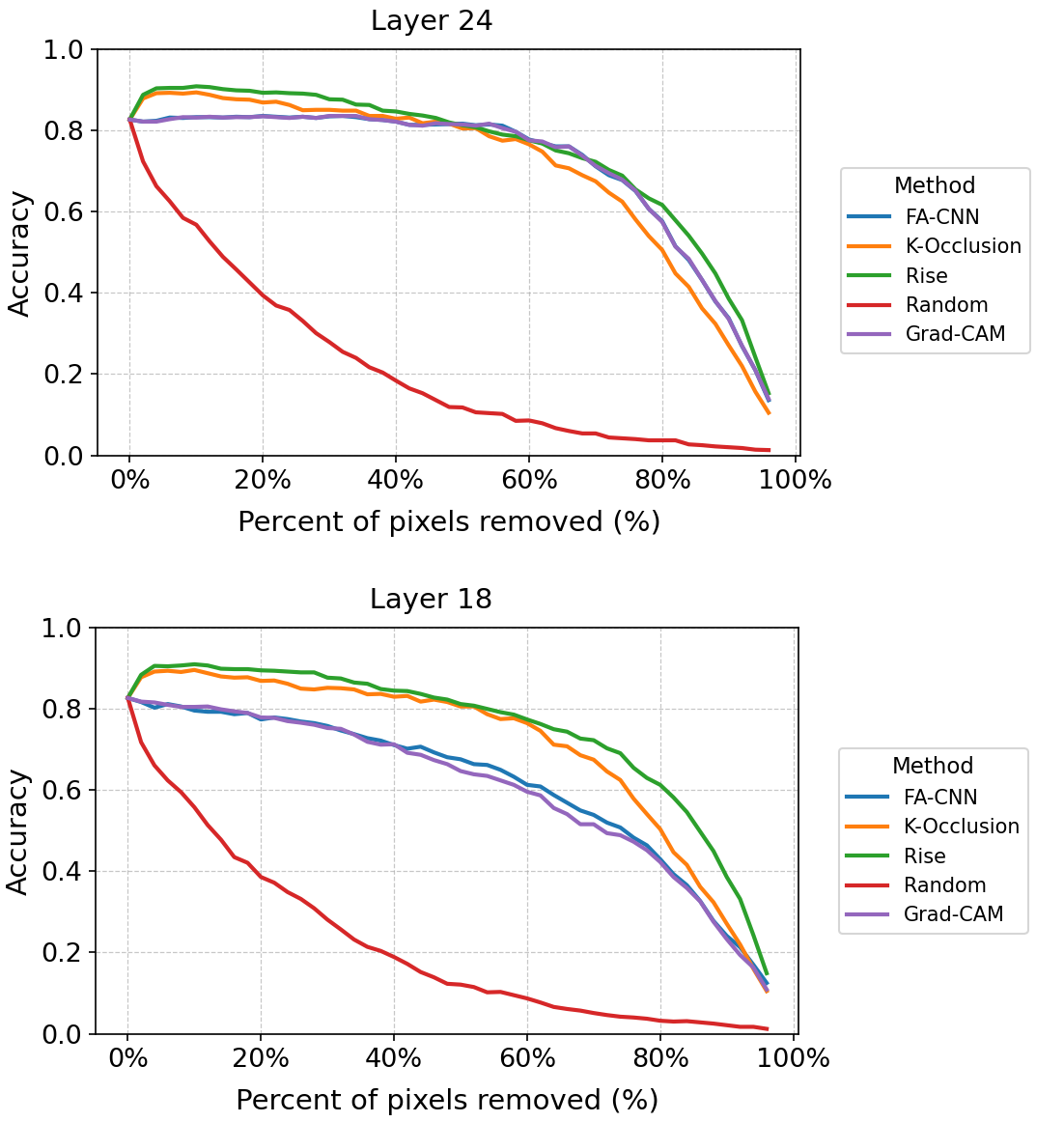}
    \caption{Classification Accuracy with Percent Pixels Removed on Layer 24 (top) and Layer 18 (bottom) on ImangeNet-100 Dataset.}
    \label{fig:facnn_block}
\end{figure}

\begin{figure*}[htbp]
  \centering


\includegraphics[width=1\textwidth]{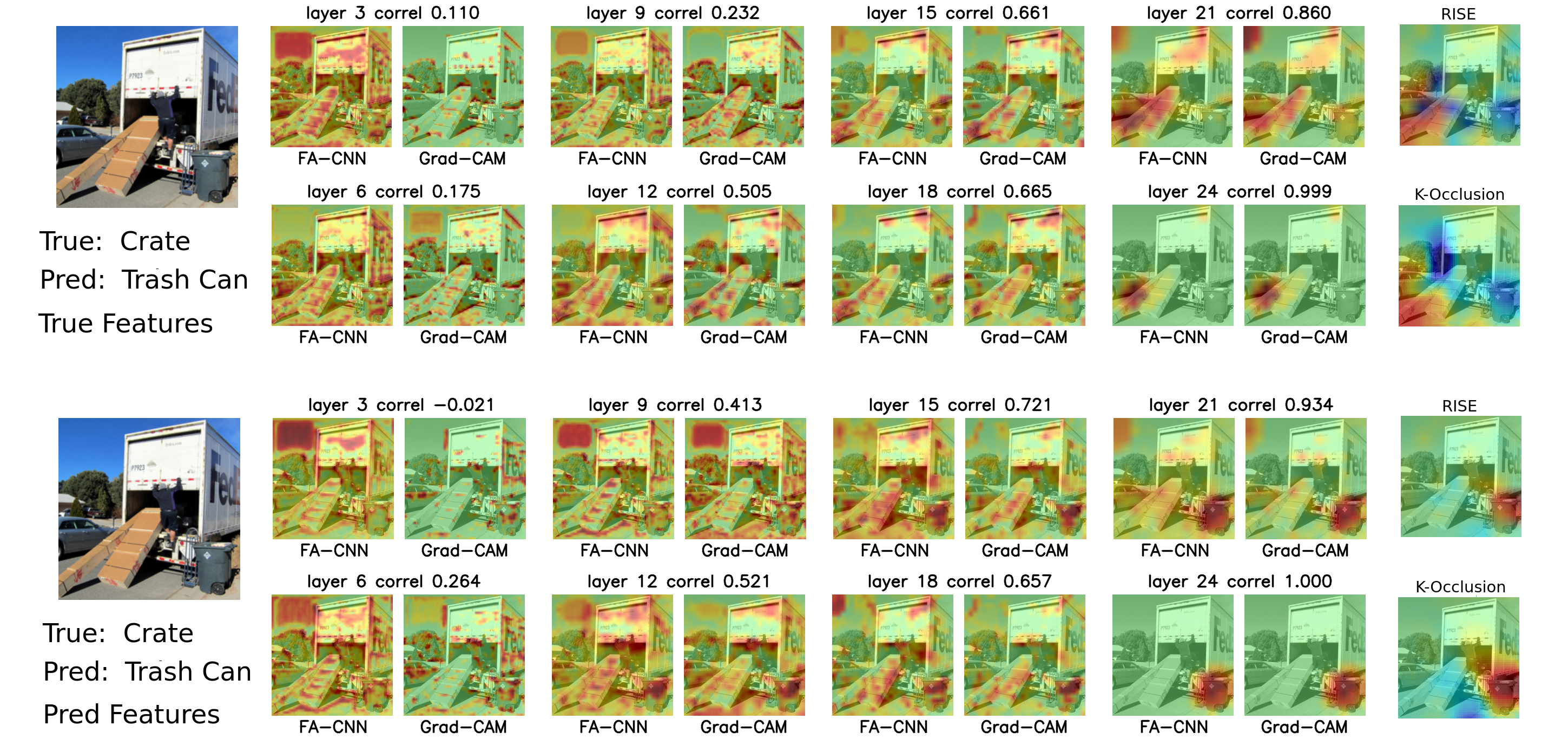}
    \caption{Comarison of averaged FA-CNN raw featuremaps (Left) versus Grad-CAM maps (Right) and Perturbation methods (Far right).}
    \label{fig:qualitative_wrong}
    \hfill
\end{figure*}



\subsection{Qualitative Evaluation}

Figures \ref{fig:billboard} and \ref{fig:qualitative_wrong} show examples of averaged FA-CNN feature maps versus Grad-CAM, RISE, and K-Occlusion.  Penultimate FA-CNN feature maps show perfect correlation with Grad-CAM.  Intermediate feature maps show gradual morphing and provide insight as to which classes the model is considering at various depths.  In Figure \ref{fig:billboard}, the FA-CNN feature maps for the \textit{Stork} class show very good agreement with Grad-CAM in layers 12 and onward.  We also see that the FA-CNN featuremaps for the Stork class are most active over the head and upper leg area. This result qualitatively agrees with RISE, although K-Occlusion appears to only identify the head region. Moreover we see that the result from FA-CNN Layer 18 also very closely agrees with RISE over specific areas of the head and upper leg.

Figure \ref{fig:qualitative_wrong} shows an interesting result in that the image is of a shipping crate, but there is also a trash can in the scene.  Additional examples are in Supplemental Materials.  All methods correctly show that the shipping crate region is important for correctly classifying the image as a crate, and that the trash can influential in steering the network toward  misclassification.
We see that the earlier layers are quite noisy in distinguishing these classes showing that crates and trash cans are relatively high-level semantic concepts that require great model depth.  The FA-CNN and Grad-CAM results show strong correlation starting layer 21, although visual similarity can be seen in layer 9 onward.  We also observe that perturbation methods also identify negative attributions \textit{blue}, whereas FA-CNN attribution maps only show positive attributions \textit{red}.  These negative attributions may be why the perturbation methods are able to increase accuracy on the percent pixels removed task, whereas Grad-CAM and FA-CNN raw features are unable to do so.


\section{Discussion and Future Work}
\label{sec:discussion}

We have shown that end-to-end feature alignment is a simple and viable strategy to improve the interpretability of raw CNN feature maps.  We prove two theoretical results.  1. Class aligned FA-CNN penultimate feature maps can be simply averaged together to produce Grad-CAM saliency maps.  2. Rescaled FA-CNN features exhibit a property known as \textit{bounded increment through depth} which is useful for model refinement and visualization.  We find that qualitatively, gradual morphing is very useful in determining the \textit{required network depth} for the model to identify a labeled class.  We find that the dampened skip connections and global average pooling actually slightly improve classification performance relative to an identical Vanilla-CNN design.  Moreover performance is slightly higher than ResNet when trained using an identical training recipe, and slightly lower as compared to a stronger standard recipe.
Furthermore we show that FA-CNN feature maps achieve good qualitative performance on the explainability task of zeroing irrelevant pixels, although this performance is somewhat lower than exhaustive perturbation methods.

A limitation of our method is the inability to identify negative class attribution.  There are several possible approaches that might be applicable to this problem including the use of positive attribution in other classes as evidence of negative attribution to a target class.  Alternatively, it may be possible to explore the use of alternative activation functions other than ReLU for the feature representation.  Typical activation functions including ReLU, Leaky-ReLU and others all have the side effect of either restricting the domain to the non-negative features, and/or causing the distribution of features to become highly non-normal \cite{chapman2024interpretable}.  It is possible that alternative activation functions could allow the deep features to intrinsically learn negative attributions in addition to positive attributions within the raw feature space. 

Toward future work, an important question is whether the FA-CNN approach would be equally applicable to Vision Transformer architectures.  We do not see any fundamental reason why not, because much like CNN architectures, Vision Transformer architectures also intrinsically make use of skipped connections before and after MSA and MLP blocks that could be replaced with the proposed dampened skip connections.  Moreover, many vision transformer classifiers make use of Global Average Pooling of the tokens immediately prior to a classification head, which is indeed in the spirit of our design.  If the final classifier were replaced by pooling we believe this would push the MLP blocks to learn how to separate the classes directly.  It would also be interesting to determine if such a hybrid CNN/ViT design could achieve class-specific attention rollout, which is presently a challenging problem with ViT class attribution.  One of the problems with attention rollout for ViT architectures, is that the attention heads are not class specific.  But if the token features were intrinsically class-aligned as similar to FA-CNN, one could in theory design class-specific attention heads, and then trivially perform attention rollout separately for each class.

\subsection*{Acknowledgment}
We would like to thank the Frost Institute for Data Science and Computing for their support.

{
    \small
    \bibliographystyle{ieeenat_fullname}
    \bibliography{main}
}


\end{document}